\DeclareMathOperator*{\argmin}{arg\,min}
\renewcommand{\KwData}{\textbf{Input:}}
\renewcommand{\KwResult}{\textbf{Output:}}
\newtheorem{theorem}{Theorem}
\newtheorem{definition}{Definition}
\title{SAIL: Self-Augmented Graph Contrastive Learning}
\author {
    Lu Yu\textsuperscript{\rm 2,1},
    Shichao Pei\textsuperscript{\rm 1},
    Lizhong Ding\textsuperscript{\rm 5},
    Jun Zhou\textsuperscript{\rm 2},
    Longfei Li\textsuperscript{\rm 2},\\
    Chuxu Zhang$^{4*}$,
    Xiangliang Zhang\textsuperscript{\rm 3,1}\footnote{Corresponding author.}
}
\begin{document}

\maketitle

\begin{abstract}
This paper studies learning node representations with graph neural networks (GNNs) for unsupervised scenario. Specifically, we derive a theoretical analysis and provide an empirical demonstration about the non-steady performance of GNNs over different graph datasets, when the supervision signals are not appropriately defined. The performance of GNNs depends on both the node feature smoothness and the locality of graph structure. To smooth the discrepancy of node proximity measured by graph topology and node feature, we proposed SAIL - a novel \underline{S}elf-\underline{A}ugmented graph contrast\underline{i}ve \underline{L}earning framework, with two complementary self-distilling regularization modules, \emph{i.e.}, intra- and inter-graph knowledge distillation. We demonstrate the competitive performance of SAIL on a variety of graph applications. Even with a single GNN layer, SAIL has consistently competitive or even better performance on various benchmark datasets, comparing with state-of-the-art baselines.
\end{abstract}

\section{Introduction}
Graph neural networks (GNNs) have been a leading effective framework of learning graph representations. The key of GNNs roots at the repeated aggregation over local neighbors to obtain smoothing node representations by filtering out noise existing in the raw node features. With the enormous architectures proposed~\cite{kipf2016semi,hamilton2017inductive,velivckovic2017graph}, learning GNN models to maintain local smoothness usually depends on supervised signals (\emph{e.g.}, node labels or graph labels). However, labeled information is not always available in many scenarios. Along with the raising attention on self-supervised learning~\cite{velivckovic2018deep,hassani2020contrastive}, pre-training GNNs without labels has become an alternative way to learn GNN models. 

There are a group of unsupervised node representation learning models in the spirit of \emph{self-supervised learning}. As one of the most representatives, predicting contextual neighbors (\emph{e.g.}, DeepWalk~\cite{perozzi2014deepwalk} or node2vec~\cite{grover2016node2vec}) enforces the locally connected nodes to have similar representations. Self-supervising signals of this method are designed to extract local structure dependency but discard the contribution of node feature smoothness which has been utilized to improve expressive capability of GNNs~\cite{kipf2016semi,wu2019simplifying}. Another line pays attention to maximizing the \emph{mutual information} (MI) criterion to make agreement on \emph{multi-view graph representations}~\cite{hassani2020contrastive,you2020graph}, in which each view of augmented graph is generated by operations on nodes, attributes, \emph{etc}. However, most of them aim at making agreement on the graph-level representations~\cite{you2020graph,sun2019infograph}, which might be not suitable for node-level tasks. 
 
Instead of creating multiple views through \emph{graph augmentation}~\cite{hassani2020contrastive} methods, there are recent works building upon \emph{self-augmented} views created by the intermedian hidden layers 
of GNN. As a pioneering work, deep graph infomax (DGI)~\cite{velivckovic2018deep} proposes to maximize MI between the summarized graph 
and node embeddings.
However, the summarized graph embedding contains the global context that 
might not be shared by all nodes. Inspired by DGI, graphical mutual information (GMI)~\cite{peng2020graph} turns to maximize the edge MI between the created views of two adjacent nodes. As GMI focuses on the edge MI maximization task, it lacks a bird's eye on the learned node representations. The learned GNN might bias towards performing well on edge prediction task, but downgrades on the other tasks like node clustering or classification. Recently some works~\cite{mandal2021meta} attempt to bring the idea of \emph{meta-learning} to train GNNs with meta knowledge which can help to avoid the bias caused by single pretext task. However, the meta-GNN might contain knowledge that cause task discrepancy issue~\cite{tianconsistent,wang2020meta}.

With the knowledge of the previous work,
we just wonder \textbf{\emph{can we advance the expressivity of GNNs with the knowledge extracted by themselves in an unsupervised way?}} In order to answer this question, we theoretically dissect the graph convolution operations (shown in Theorem~\ref{gcn:high:prox}), and find that the smoothness of node representations generated by GNNs is dominated by smoothness of neighborhood embeddings from previous layers and the structural similarity. It suggests that improving the graph representations of shallow layer can indirectly help get better node embeddings of deep layer or the final layer. Based on this observation, we propose SAIL, a  \emph{\underline{S}elf-\underline{A}ugmented graph contrast\underline{i}ve \underline{L}earning} framework, in which we mainly use two different views (\emph{i.e.}, non-linear mappings of input node feature and the final layer of GNN) of transformed node representations. 

More specifically, we propose to iteratively use the smoothed node representations from the output layer of GNNs to improve the node representations of shallow layer or the input layer (\emph{e.g.}, non-linear mapping of input node features). The most recent work~\cite{chen2021selfdistilling} also shares a similar idea for supervised task, while the different thing is that it forces the knowledge flow from low-level to high-level neural representations. Besides attempting to make an agreement on the selected views, we introduced a \emph{self-distilling} module to raise consistency regularization over node representations from both local and global perspectives. The design of self-distilling module is inspired by a recent work \cite{wang2020understanding} on the importance of \emph{alignment} and \emph{uniformity} for a successful contrastive learning method. With a given distribution of positive pair, the alignment calculates the expected similarity of connected nodes (\emph{i.e.} locally closeness), and the \emph{uniformity} measures how well the encoded node embeddings are globally distributed. The intra-distilling module aims at forcing the learnt representations and node features to have consistent uniformity distribution. Inspired by another piece of work~\cite{ishida2020we} indicating out the alleviating the learning bias by injecting noise into objective, we design an inter-distilling framework to align the node representations from a copied teacher model to noisy student model. Through multiple runs of the inter-distilling module, we implicitly mimic the deep smoothing operation with a shallow GNN (\emph{e.g.}, only a single GNN layer), while avoiding noisy information from high-order neighbors to cause the known oversmoothing issue~\cite{chen2019measuring,li2018deeper} since shallow GNN only depends on the local neighbors. The proposed SAIL can learn shallow but powerful GNN. Even with a single GNN layer, it has consistently competitive or even better performance on various benchmark datasets, comparing to state-of-the-art baselines. We summarize the contributions of this work as follow:
\begin{itemize}
\item[-] We present SAIL, to the best of our knowledge, the first generic self-supervised framework designed for advancing the expressivity of GNNs through distilling knowledge of self-augmented views but not depending on any external teacher model.
\item[-] We introduce a universal self-distilling module for unsupervised learning graph neural networks. The presented self-distilling method can bring several advantages including but not limited to: 1) distilling knowledge from a self-created teacher model following the graph topology; 2) we can mimic a deep smoothing operation with a shallow GNN by iteratively distilling knowledge from teacher models to guide a noisy student model with future knowledge.
\item[-] We demonstrate the effectiveness of proposed method with thorough experiments on multiple benchmark datasets and various tasks, yielding consistent improvement comparing with state-of-the-art baselines.
\end{itemize}

\section{Related Work} \label{sec:relatedwork}
\textbf{Graph Neural Networks.}  In recent years, we have witnessed a fast progress of graph neural network in both methodology study and its applications. As one of the pioneering research, spectral graph convolution methods~\cite{defferrard2016convolutional} generalized the convolution operation to non-Euclidean graph data. Kipf \emph{et al.}~\cite{kipf2016semi} reduced the computation complexity to 1-order Chebyshev approximation with an affined assumption.  NT \emph{et al.}~\cite{Hoang:2019:LPF} justified that classical graph convolution network (GCN) and its variants are just low-pass filter. At the same time, both of studies~\cite{li:2019:label,klicpera:2019:diffusion} proposed to replace standard GCN layer with a normalized high-order low-pass filters (\emph{e.g.} personalized PageRank, heats kernel). This conclusion can help to answer why simplified GCN proposed by Wu \emph{et al.}~\cite{wu2019simplifying} has competitive performance with complicated multi-layer GNNs. Besides GCNs, many novel GNNs have been proposed, such as multi-head attention models~\cite{velivckovic2017graph,ma2019disentangled}, recurrent graph neural network~\cite{liu2019geniepath}, RevGNN~\cite{li2021training}, heterogeneous graph neural network~\cite{zhang2019heterogeneous}.

\noindent
\textbf{Self-supervised Learning for GNNs.}  
In addition to the line of work following Deepwalk~\cite{perozzi2014deepwalk} for constructing self-supervising signals, mutual information (MI) maximization~\cite{velivckovic2018deep} over the input and output representations shows up as an alternative solution. In analogy to discriminating that output image representation is generated from the input image patch or noisy image, Veli{\v{c}}kovi{\'c} \emph{et al.}~\cite{velivckovic2018deep} propose deep graph infomax (DGI) criterion to maximize the mutual information between a high-level ``global" graph summary vector and a ``local" patch representation. Inspired by DGI, more and more recent methods like InfoGraph~\cite{sun2019infograph}, GraphCL~\cite{you2020graph}, GCC \cite{qiu2020gcc} and pre-training graph neural networks~\cite{hu2019strategies} are designed for learning graph representations. 
Most of self-supervised learning methods can be distinguished by the way to do data augmentation and the predefined pretext tasks \cite{xie2021self,sun2020multi,you2020does,zhao2021multi,xu2021infogcl}. For example, graph contrastive coding (GCC) borrows the idea from the momentum contrastive learning \cite{he2020momentum}, and aims at learning transferrable graph neural networks taking the node structural feature as the input. Both of GraphCL \cite{you2020graph} and InfoGraph \cite{sun2019infograph} are designed for learning agreement of graph-level representations of augmented graph patches.

\begin{figure*}[tp]
    \centering
    \label{overall:arch}
    \includegraphics[scale=0.6]{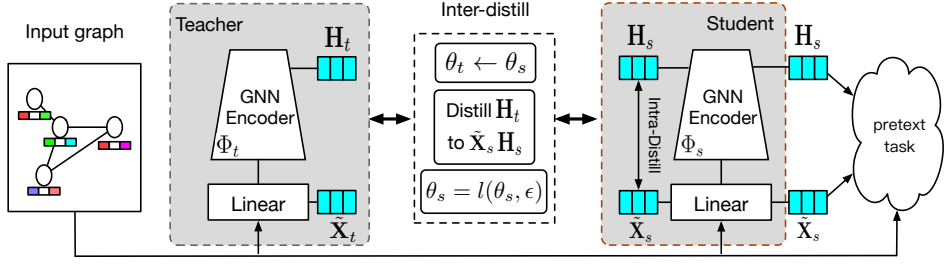}
    \caption{Overall architecture of the proposed self-supervised GNN with intra- and inter-distilling modules. The intra-distilling module aims at forcing the learnt representations and node features to have consistent uniformity distribution. The inter-distilling module consists of three operations including: 1) creating a teacher model by copying the target model (\emph{i.e.}, $\theta_t \leftarrow \theta_s$), 2) fading the target model to a student model by injecting noise into the model parameters ($l(\theta_s, \epsilon) = w\theta_s + (1 - w)\epsilon$), 3) supervising the faded target model (\emph{i.e.}, student model) with future knowledge (\emph{i.e.}, $\textbf{H}_t$).}
    \label{fig:my_label}
\end{figure*}
\section{Methodology} \label{sec:mtd}
Let $\mathcal{G}=\{\mathcal{V}, \mathcal{E}, \textbf{X}\}$ denote an   attribute graph where $\mathcal{V}$ is the node set  $\{v_i \in \mathcal{V}\}$,  $\mathcal{E}$ is the edge set, and $\textbf{X}\in \mathbb{R}^{N\times F}$ is the node \emph{feature matrix} where each row $\textbf{x}_i$ stands for the \emph{node} feature of $v_i$. We use \textbf{A} represents the node \emph{relation matrix}  where  $a_{ij}=1$ if there existing a link between node $v_i$ and $v_j$, i.e., $e_{ij}\in \mathcal{E}$, otherwise $a_{ij}=0$. We define the degree matrix $\textbf{D} = diag(d_1, d_2,\cdots,d_N)$ where each element equals to the row-sum of adjacency matrix $d_i=\sum_j a_{ij}$. 

Our goal is to learn a graph neural \emph{encoder}  $\Phi(\textbf{X}, \textbf{A}|\theta) = \textbf{H}$, where $\textbf{H} \equiv \{h_1, h_2, \cdots, h_N\}$ is the representation learned for nodes in $\mathcal{V}$. Deep graph encoder $\Phi$ usually has oversmoothing problem. In this work, we instantiate a GNN with single layer to validate the effectiveness of proposed method to learn qualified node representations with shallow neighbors. But the analysis results in the following section can be easily extended to deeper GNNs. The graph neural encoder~\cite{li:2019:label,wu2019simplifying} used in this paper is defined as:
\begin{equation}
\label{eq:gcn}
\begin{aligned}
&\tilde{\textbf{A}}=D^{-\frac {1} {2}} (A+I) D^{-\frac {1} {2}}\\
 & \Phi(\textbf{X}, \tilde{\textbf{A}})= \sigma(\tilde{\textbf{A}}^2 \textbf{X} \textbf{W})\\
\end{aligned}
\end{equation}
where $\textbf{W}\in \mathbb{R}^{F\times F'}$ is learnable parameter and $\sigma(\cdot)$ denotes the activation function.
The vector $h_i\in \mathbb{R}^{F'}$ actually summarizes a subgraph containing the second-order neighbors centered around node $v_i$. We refer to $\textbf{H}$ and $\widetilde{\textbf{X}}=\textbf{X} \textbf{W}$ as the \emph{self-agumented node representations} after transformed raw node featatures $\textbf{X}$. $\widetilde{\textbf{X}}$ denotes the low-level \emph{node feature}, which might contain lots of noisy information. 

\begin{definition}[Second-order Graph Regularization]
The objective of second-order graph regularization is to  minimize the following equation
\begin{equation}
\sum_{e_{ij} \in \mathcal{E}} s_{ij}|| h_i - h_j ||_2^2
\end{equation}
where $s_{ij}$ is the second-order similarity which can be defined as cosine similarity $s_{ij} = \frac {\sum_{c\in \mathcal{N}(i)\cap \mathcal{N}(j)} \alpha_{ic} \cdot \alpha_{jc}} {||\alpha_{i\cdot}||_2 ||\alpha_{j\cdot}||_2}$, and $h_i$ denotes the node representation.
\end{definition}

\begin{theorem}\label{gcn:high:prox}
Suppose that a GNN aggregates node representations as $h_i^l = \sigma(\sum_{j\in \mathcal{N}(i)\cap v_i} \alpha_{ij} h_j^{l-1})$, where $\alpha_{ij}$ stands for the element of a normalized relation matrix. If the first-order gradient of the selected activation function $\sigma(x)$ satisfies $|\sigma'(x)| \leq 1$, then the graph neural operator approximately equals to a second-order proximity graph regularization over the node representations.
\end{theorem}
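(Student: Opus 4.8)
The plan is to compare the post-aggregation embeddings $h_i^l$ and $h_j^l$ along an edge $e_{ij}\in\mathcal E$ and to show that $\|h_i^l-h_j^l\|_2^2$ is controlled by a weighted combination of the second-order similarity $s_{ij}$ and the discrepancies of the previous-layer neighbourhood embeddings $h^{l-1}$ — exactly the quantity that the second-order proximity regularization of Definition~1 tries to make small. First I would use the only analytic hypothesis available, $|\sigma'(x)|\le 1$: it makes $\sigma$ (applied coordinate-wise) $1$-Lipschitz, so that, writing $\mathbf z_i=\sum_{c\in\mathcal N(i)\cap v_i}\alpha_{ic}h_c^{l-1}$ for the pre-activation aggregation, one gets $\|h_i^l-h_j^l\|_2^2\le\|\mathbf z_i-\mathbf z_j\|_2^2$. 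This reduces the claim to a statement about the linear aggregation operator, after which $\sigma$ plays no further role.

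Next I would expand $\|\mathbf z_i-\mathbf z_j\|_2^2=\|\mathbf z_i\|_2^2+\|\mathbf z_j\|_2^2-2\langle \mathbf z_i,\mathbf z_j\rangle$ and rewrite each term as a double sum $\sum_{c,c'}\alpha_{\cdot c}\alpha_{\cdot c'}\langle h_c^{l-1},h_{c'}^{l-1}\rangle$. Splitting each double sum into its diagonal part $c=c'$ and its off-diagonal part is the crux. The diagonal contributions collapse to $\sum_c(\alpha_{ic}-\alpha_{jc})^2\|h_c^{l-1}\|_2^2$; normalizing the previous-layer embeddings (or absorbing their norms), the quantity $\sum_{c}\alpha_{ic}\alpha_{jc}$ divided by $\|\alpha_{i\cdot}\|_2\|\alpha_{j\cdot}\|_2$ is, up to the two self-loop terms $c=i,j$, precisely the second-order similarity $s_{ij}$ of Definition~1 (note that $s_{ij}$ there is built from the same normalized relation matrix $\alpha$). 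Hence the diagonal part equals $\|\alpha_{i\cdot}\|_2^2+\|\alpha_{j\cdot}\|_2^2-2s_{ij}\|\alpha_{i\cdot}\|_2\|\alpha_{j\cdot}\|_2$ up to that correction — a monotone function of $s_{ij}$ that is small exactly when $i$ and $j$ are structurally similar.

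To make the dependence on the previous layer fully explicit I would present the alternative route: when the aggregation rows are sub-stochastic one may write $\mathbf z_i-\mathbf z_j=\sum_{c,c'}\alpha_{ic}\alpha_{jc'}(h_c^{l-1}-h_{c'}^{l-1})$ and apply Jensen's inequality to obtain $\|h_i^l-h_j^l\|_2^2\le\sum_{c,c'}\alpha_{ic}\alpha_{jc'}\|h_c^{l-1}-h_{c'}^{l-1}\|_2^2$, whose total coefficient mass on the ``mismatched'' pairs $c\ne c'$ is $1-\langle\alpha_{i\cdot},\alpha_{j\cdot}\rangle=1-s_{ij}\|\alpha_{i\cdot}\|_2\|\alpha_{j\cdot}\|_2$. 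Either way, the edge-discrepancy of $h^l$ is dominated jointly by $s_{ij}$ and by the smoothness of the neighbourhood embeddings of the previous layer, so that summing over $e_{ij}\in\mathcal E$ shows that making the GNN output smooth along edges is, up to a residual, the same as minimizing $\sum_{e_{ij}\in\mathcal E}s_{ij}\|h_i-h_j\|_2^2$; equivalently, one aggregation step is an approximate descent step on the second-order proximity functional of Definition~1.

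The hard part will be the last step: the residual off-diagonal terms $\sum_{c\ne c'}\alpha_{ic}\alpha_{jc'}\langle h_c^{l-1},h_{c'}^{l-1}\rangle$ (and the analogous cross terms inside each $\|\mathbf z\|_2^2$) are what the word ``approximately'' in the statement refers to: they vanish exactly when the previous-layer embeddings of distinct nodes are orthogonal and are of lower order whenever those embeddings are well spread out — the uniformity property invoked elsewhere in the paper — so the bound is tight only in that regime. A secondary nuisance is the bookkeeping needed to reconcile the row-sum normalization used in the Jensen step with the $\ell_2$ row-norms $\|\alpha_{i\cdot}\|_2$ appearing in the definition of $s_{ij}$, and to absorb the self-loop terms; both are routine once the uniformity assumption is in place, and everything else is just the Lipschitz estimate plus a Cauchy–Schwarz/Jensen expansion.
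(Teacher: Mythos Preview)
Your plan is correct and shares the opening move with the paper --- both use $|\sigma'(x)|\le 1$ to pass from $\|h_i^l-h_j^l\|$ to the pre-activation gap $\|\mathbf z_i-\mathbf z_j\|$ --- but the decomposition thereafter is genuinely different. The paper does not expand $\|\mathbf z_i-\mathbf z_j\|^2$ bilinearly; instead it splits each $\mathbf z_i$ additively into a self term $\alpha_{ii}h_i^{l-1}$, a common-neighbour sum $\mathcal S_i=\sum_{c\in\mathcal N(i)\cap\mathcal N(j)}\alpha_{ic}h_c^{l-1}$, and a non-common-neighbour sum $\mathcal D_i$, then applies the triangle inequality to obtain three named pieces: a ``local feature smoothness'' term $\|\alpha_{ii}h_i^{l-1}-\alpha_{jj}h_j^{l-1}\|$, a ``structure proximity'' term $\|\sum_{c}(\alpha_{ic}-\alpha_{jc})h_c^{l-1}\|$ over common neighbours, and the residual $\|\mathcal D_i\|+\|\mathcal D_j\|$. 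The paper stops there and argues qualitatively that the middle term is small when $\alpha_{ic}\approx\alpha_{jc}$; it never writes $s_{ij}$ into the bound. Your diagonal/off-diagonal expansion (and the alternative Jensen route) is more work but buys a tighter, more quantitative link to the cosine similarity $s_{ij}$ of Definition~1, at the price of the orthogonality/uniformity caveat you flag for the off-diagonal residual. The paper's argument is shorter and avoids that caveat, but is correspondingly looser and leaves the ``$\approx$ second-order regularization'' claim at the heuristic level.
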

\begin{proof}
Here we mainly focus on analyzing GNNs whose aggregation operator mainly roots on weighted sum over the neighbors, \emph{i.e.} $h^l = \sigma(\sum_{j\in\mathcal{N}(i)\cap v_i} \alpha_{ij} h_j^{l - 1})$. Typical examples include but not limited to GCN~\cite{kipf2016semi} where $\alpha_{ij}$ can be the element of normalized adjacent matrix $\tilde{\textbf{A}}=D^{-\frac {1} {2}} (A+I) D^{-\frac {1} {2}}$ or $\tilde{\textbf{A}}^2$. The node representation $h_i^l$ can be divided into three parts: the node representations $\alpha_{ii} h_i^{l-1}$, the sum of common neighbor representations $\mathcal{S}_i=\sum_{c\in \mathcal{N}(i)\cap \mathcal{N}(j)} \alpha_{ic} h_c^{l-1}$, the sum of non-common neighbor representations $\mathcal{D}_i = \sum_{q\in \mathcal{N}(i) - \mathcal{N}(i)\cap \mathcal{N}(j)} \alpha_{iq} h_q^{l-1}$. Let $y = \sigma(x)$, and suppose that the selected activation function holds $|\sigma'(x)| \leq 1$. We can have $\frac {(y_1 - y_2)^2} {(x_1 - x_2)^2} = \frac {|y_1 - y_2|^2} {|x_1 - x_2|^2} \leq 1$. Let's reformulate the definition of $h^l$  as $ h^l = \sigma(\hat{h}^l)$ and $\hat{h}^l = \sum_{j\in\mathcal{N}(i)\cap v_i} \alpha_{ij} h_j^{l - 1}$. Then we can have $||h_i^l - h_j^l ||_2 \leq ||\hat{h}_i^l - \hat{h}_j^l ||_2.$ The distance between the representations $h_i^l$ and $h_j^l$ satisfies:
\begin{equation}
\label{eq:distance}
\begin{aligned}
& ||h_i^l - h_j^l ||_2 \leq ||\hat{h}_i^l - \hat{h}_j^l ||_2 \\
&= ||(\alpha_{ii}h_i^{l-1} - \alpha_{jj} h_j^{l-1}) + (\mathcal{S}_i - \mathcal{S}_j) + (\mathcal{D}_i - \mathcal{D}_j)||_2\\
	&  \leq ||(\alpha_{ii}h_i^{l-1} - \alpha_{jj} h_j^{l-1})||_2 + ||(\mathcal{S}_i - \mathcal{S}_j)||_2 +||(\mathcal{D}_i - \mathcal{D}_j)||_2\\
	& \leq \underbrace{||(\alpha_{ii}h_i^{l-1} - \alpha_{jj} h_j^{l-1})||_2}_{local\ feature\ smoothness} + \underbrace{||\mathcal{D}_i||_2 + ||\mathcal{D}_j||_2}_{non-common\ neighbor}\\
	& + \underbrace{||\sum_{c\in \mathcal{N}(i)\cap \mathcal{N}(j)} (\alpha_{ic} - \alpha_{jc}) h_c^{l-1}||_2}_{structure\ proximity} \\
\end{aligned}
\end{equation}
\end{proof}
From Equation~\ref{eq:distance}, we can see that the upper bound of similarity of a pair of nodes is mainly influenced by \emph{local feature smoothness} and \emph{structure proximity}. According to the proof shown above, if a pair of node $(v_i, v_j)$ has smoothed local features and similar structure proximity with many common similar neighbors ($i.e.\; \alpha_{ic} \approx \alpha_{jc}$), the obtained node representation of a GNN will also enforce their node representations to be similar.

\subsection{Learning from Self-augmented View}\label{ssl:gnn}

From the conclusion given in Theorem~\ref{gcn:high:prox}, we can see that the quality of each GNN layer has close relation to previous layer. As the initial layer, the quality of input layer feature $\tilde{\textbf{X}}$ will propagate from the bottom to the top layer of a given GNN model. As a graph neural layer can work as a low-pass filter~\cite{Hoang:2019:LPF}, its output $\textbf{H}$ are actually smoothed node representations after filtering out the noisy information existing in the low-level features. Usually single GNN layer might not perfectly get a clean node representations. By stacking multiple layers, a deep GNN model can repeatedly improved representations from previous layer. However, deep GNN models tend to oversmooth the node representations with unlimited neighborhood mixing. In this work, we attempt to improve the GNN model with shallow neighborhood by shaping the low-level node features with relatively smoothed node representations.

To overcome the above-discussed challenges, we propose to transfer the learnt knowledge in the last GNN layer $\textbf{H}$ to shape $\widetilde{\textbf{X}}$ in both local and global view. Concretely, instead of constructing contrastive learning loss over the node representations $h$ at the same GNN layer, we turn to maximize the neighborhood predicting probability between a node representation $h$ and its input node features $\widetilde{x}$ in its neighbors. Formally, for a sample set $\{v_i, v_j, v_k\}$ where  $e_{ij} \in \mathcal{E}$ but $e_{ik} \notin \mathcal{E}$, the loss $\ell_{jk}^i$ is defined on the pairwise comparison of  $(h_i, \widetilde{x}_j)$ and $(h_i, \widetilde{x}_k))$. Therefore, our self-supervised learning for GNN has the loss function defined below,
\begin{equation}
\label{ssl:obj:1}
\mathcal{L}_{ssl} = \sum_{e_{ij}\in \mathcal{E}} \sum_{e_{ik}\notin \mathcal{E}} - \ell(\psi(h_i, \widetilde{x}_j), \psi(h_i, \widetilde{x}_k)) + \lambda \mathcal{R}(\mathcal{G}),
\end{equation}
where $\ell(\cdot)$ can be an arbitrary contrastive loss function, $\psi$ is a scoring function, and $\mathcal{R}$ is the regularization function with weight $\lambda$ for implementing graph structural constraints (to be introduced in next section). There are lots of candidates for contrastive loss $\ell()$. In this work we use logistic pairwise loss $\ln \sigma(\psi(h_i, \widetilde{x}_j) - \psi(h_i, \widetilde{x}_k))$, wehre $\sigma(x) = \frac {1} {1 + exp(-x)}$. 

\subsection{Self-distilling Graph Knowledge Regularization}

The objective function defined in Eq. (\ref{ssl:obj:1}) models the interactions between output node representations $h$ and input node features $\widetilde{x}$, which can be regarded as an \emph{intra-model knowledge distillation} process from smoothed node embeddings to denoise low-level node features. However, the raised contrastive samples over edge connectivity might fail to represent a whole picture of node representation distribution, and cause a bias to learning node representations favoring to predict edges. We present a self-distilling method shown in Figure 1 consists of intra- and inter-distilling modules.

\begin{algorithm}[ht]
 \caption{$\mathsf{\textsc{SAIL}}$}
 \label{pair:rank}
 \KwData \ graph $\mathcal{G}=\{\mathcal{V}, \mathcal{E}, \textbf{X}\}$, hyperparameters$=\{\alpha,\lambda\}$\\
 \KwResult \ learned GNN $\Phi$\\
 initialize $\Phi_0$ by optimizing Eq. (\ref{eq:ssl:obj}) without $R_{cross}$\\
 \For{$m \leftarrow 1$ to $n$ }{
 	\If {$m \% \tau == 0$}
		{$\theta_t \leftarrow \theta_s$\;
		$\theta_s \leftarrow w\theta_s + (1 - w)\epsilon$\; 
		}
	
 	$Optimize\ \mathcal{L}_{ssl}(\Phi_t, \Phi_s, \textbf{X}, \textbf{A} )$;\\
 }
return $\Phi_s$\;
\end{algorithm}

\textbf{Intra-distilling module:} To supplement the loss defined on the individual pairwise  samples, we introduce a regularization term to ensure the distribution consistency on the relations between the learned node representations $\textbf{H}$ and the node features $\widetilde{\textbf{X}}$ over a set of randomly sampled nodes.
Let $LS=\{LS_1, LS_2, \cdots, LS_N\}$ denote the randomly sampled pseudo relation graph, where $LS_i \subset \mathcal{V}$ and $|LS_i|=d$ is the number of sampled pseudo local neighbors for center node $v_i$. The estimated proximity for each node in $i$-th local structure $LS_i$ is computed by
\begin{equation}
\begin{aligned}
S_{ij}^t &= \frac {exp(\psi(h_i, \widetilde{x}_j))} {\sum_{v_j \in LS_i} exp(\psi(h_i, \widetilde{x}_j))}\\
S_{ij}^s &= \frac {exp(\psi(\widetilde{x}_i, \widetilde{x}_j))} {\sum_{v_j \in LS_i} exp(\psi(\widetilde{x}_i, \widetilde{x}_j))}
\end{aligned}
\end{equation}
where $S_{ij}^t$ and $S_{ij}^s$ denote the similarity estimated from different node representations between node $v_i$ and $v_j$. The $S_{ij}^t$ will act as the teacher signal to guide the node features $\widetilde{\textbf{X}} = \{\widetilde{x}_1, \widetilde{x}_2, \cdots, \widetilde{x}_N\}$ to agree on the relation distribution over a random sampled graph. For the node $v_i$, the relation distribution similarity can be measured as $\mathcal{S}_i = CrossEntropy(S_{[i, \cdot]}^t, S_{[i, \cdot]}^s).$
Then we can compute the relation similarity distribution over all the nodes as 
\begin{equation}
\mathcal{R}_{intra} = \sum_{i = 1}^N \mathcal{S}_i,
\end{equation}
where $\mathcal{R}_{intra}$ acts as a regularization term generated from the intra-model knowledge, and to push the learned node representations $\textbf{H}$ and node features $\widetilde{\textbf{X}}$ being consistent at a subgraph-level. 

\textbf{Inter-distilling Module:} The second regularization is to introduce the \emph{inter-distilling} module for addressing the over-smoothing issue. The inter-distilling module can guide the target GNN model by transferring the learned self-supervised knowledge. Through multiple implementations of the inter-distilling  module, we implicitly mimic the deep smoothing operation with a shallow GNN (\emph{e.g.} a single GNN layer), while avoiding to bring noisy information from high-order neighbors. The overall inter-distilling framework is shown in Figure 1. We create a teacher model $\Phi_t$ by copying the target GNN model, then inject noise into the target model that will degrade into a student model $\Phi_s$ after a fix number of iterations. Working with a self-created teacher and student model $\{\Phi_t, \Phi_s\}$ with the same architectures shown in Eq. (\ref{eq:gcn}), student model $\Phi_s(\textbf{X}, \textbf{A}) = \{\textbf{H}_s, \widetilde{\textbf{X}}_s\}$ distills knowledge from the teacher model $\Phi_t$. Since no label is available, we propose to implement \emph{representation distillation}~\cite{tian2019contrastive} with the constraint of graph structure. The knowledge distillation module consists of two parts, defined as
\begin{equation}
\mathcal{R}_{inter} = KD(\textbf{H}_t, \widetilde{\textbf{X}}_s | \mathcal{G}) + KD(\textbf{H}_t, \textbf{H}_s | \mathcal{G})
\end{equation}
where $\textbf{H}_t$ is the node representations from teacher model $\Phi_t$, and $\widetilde{\textbf{X}}_s$ = $\textbf{X}\textbf{W}$. To define    module $KD(\cdot)$, we should meet several requirements: 1) this function should be easy to compute and friendly to back-propagation strategy; and 2) it should stick to  the graph structure constraint. We resort to the conditional random field (CRF)~\cite{lafferty2001conditional} to capture the pairwise relationship between different nodes. 
For a general knowledge distillation module $KD(Y,Z|G)$, the dependency of $Z$ on $Y$ can be given following the CRF model:
\begin{equation}
\label{crf:p}
P(Z | Y) = \frac {1} {C(Y)} exp(-E(Z | Y)),
\end{equation}
where $C(\cdot)$ is the normalization factor and $E(\cdot)$ stands for the energy function, defined as follows:
\begin{equation}
\label{reg:crf}
\begin{aligned}
E(Z_i | Y_i) &= \psi_u(Z_i, Y_i) + \psi_p(Z_i, Z_j, Y_i, Y_j)\\
&=(1-\alpha) || Z_i - Y_i ||_2^2 + \alpha \sum_{j\in \mathcal{N}(i)} \beta_{ij} ||Z_i - Z_j||_2^2,
\end{aligned}
\end{equation}
where $\psi_u$ and $\psi_p$ are the unary and pairwise energy function, respectively. The parameter $\alpha \in [0,1]$ is to control the importance of two energy  functions.
When $Z$ is the node feature of student model and $Y$ is the node representation from teacher model $\Phi_{m-1}$,  the energy function defined in Equation~(\ref{reg:crf})    enforces the node $v_i$ representation from student model to be close to that in the teacher model and its neighbor nodes. After obtaining the energy function, we can resolve the CRF objective with the mean-field approximation method by employing a simple distribution $Q(Z)$ to approximate the distribution $P(Z | Y)$. Specifically distribution $Q(Z)$ can be initialized as the product marginal distributions as $Q(Z)=\Pi_{i=1}^N Q_i(Z_i)$. Through minimizing the KL divergence between these two distributions as follows:
\begin{equation}
\argmin KL(Q(Z) || P(Z | Y)).
\end{equation}
Then we can get the optimal $Q_i^*(Z_i)$ as follows:
\begin{equation}
\ln Q_i^*(Z_i) = \mathbb{E}_{j\neq i}[\ln P(Z_j | Y_j)] + const.
\end{equation}
According to Equation~(\ref{crf:p}) and (\ref{reg:crf}), we can get 
\begin{equation}
\label{crf:opt:q}
Q_i^*(Z_i)\sim exp((1-\alpha) || Z_i - Y_i ||_2^2 + \alpha \sum_{j\in \mathcal{N}(i)} \beta_{ij} ||Z_i - Z_j||_2^2),
\end{equation}
which shows that $Q_i^*(Z_i)$ is a Gaussian function. By computing its expectation, we have the optimal solution for $Z_i$ as follows:
\begin{equation}
\label{crf:com}
Z_i^{*} = \frac {(1-\alpha) Y_i + \alpha \sum_{j \in \mathcal{N}(i)} \beta_{ij} Z_j} {(1-\alpha) + \alpha \sum_{j\in \mathcal{N}(i)} \beta_{ij}}
\end{equation}
Then we can get the cross-model knowledge distillation rule by enforcing the node representations from student model to have minimized metric distance to $Z_i^*$. After replacing the random variables $Y_i$ as the node representation $h_{i}^t$ of teacher model $\Phi_t$, then we can get the final distillation regularization as follows:
\begin{equation}
\begin{aligned}
&KD(\textbf{H}_t, \widetilde{\textbf{X}}_s | \mathcal{G}) = ||\widetilde{x}_i^s - \frac {(1-\alpha) h_i^t + \alpha \sum_{j \in \mathcal{N}(i)} \beta_{ij} \widetilde{x}_j^s} {(1-\alpha) + \alpha \sum_{j\in \mathcal{N}(i)} \beta_{ij}} ||_2^2
\end{aligned}
\end{equation}
\begin{equation}
\begin{aligned}
&KD(\textbf{H}_t, \textbf{H}_s | \mathcal{G}) = ||h_i^s - \frac {(1-\alpha) h_i^t + \alpha \sum_{j \in \mathcal{N}(i)} \beta_{ij} h_j^s} {(1-\alpha) + \alpha \sum_{j\in \mathcal{N}(i)} \beta_{ij}} ||_2^2
\end{aligned}
\end{equation}
where $\widetilde{x}_i^s$ denotes the  feature of node $v_i$ from the student model $\Phi_s$, and $h_i^s$ denotes the output node representation for node $v_i$. In terms of $\beta_{ij}$ in Eq. (\ref{crf:com}), we have many choices  such as attentive weight, or mean pooling \emph{etc.} In this work, we simply initialize it with mean-pooling operation over the node representations. 
\begin{figure}[tp]
\centering
 \includegraphics[trim={0 0 0 45}, scale=0.4]{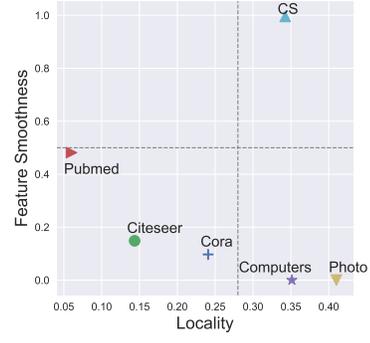}\\
\caption{Locality and node feature smoothness in the six graphs used in experimental evaluation. The larger feature smoothness value is, the more smoothing feature we have.}
\label{fig:loc:cc}
\end{figure}

The overall self-supervised learning objective in this work can be extended as follows by taking the two regularization terms:
\begin{equation}
\centering
\label{eq:ssl:obj}
\begin{aligned}
\mathcal{L}_{ssl} = \sum_{e_{ij}\in \mathcal{E}} \sum_{e_{ik}\notin \mathcal{E}} & - \ell(\psi(h_i^s, \widetilde{x}_j^s), \psi(h_i^s, \widetilde{x}_k^s)) \\
& + \lambda (\mathcal{R}_{intra} + \mathcal{R}_{inter}).
\end{aligned}
\end{equation}
where the initial teacher model $\Phi_t$ can be initialized through optimizing the proposed self-supervised learning without the cross-model distillation regularization.

\begin{table*}[htp]
\caption{Accuracy (with standard deviation) of node classification (in $\%$). The best results are highlighted in bold. Results without standard deviation are copied from original works.}
\label{result:node:clf}
\centering
\footnotesize
\begin{adjustbox}{max width=16.5cm}
\begin{tabular}{c |c| c c c c c c}
\toprule
	\textbf{Input} &\textbf{Methods} & \textbf{Cora} & \textbf{Citeseer} & \textbf{PubMed} & \textbf{Computers} & \textbf{Photo} & \textbf{CS}\\
\midrule
	\multirow{8}{*}{\textbf{X},\textbf{A},\textbf{Y}} &ChebNet~\cite{defferrard2016convolutional} & 81.2 & 69.8 & 74.4 & 70.5$\pm$0.5 & 76.9$\pm$0.3 & 92.3$\pm$0.1\\
	&MLP~\cite{kipf2016semi} & 55.1 & 46.5 & 71.4 & 55.3$\pm$0.2 & 71.1$\pm$0.1 & 85.5$\pm$0.1\\
	&GCN~\cite{kipf2016semi} & 81.5 & 70.3 & 79.0 & 76.11$\pm$0.1 & 89.0$\pm$0.3 & 92.4$\pm$0.1\\
	&SGC~\cite{wu2019simplifying} & 81.0 $\pm$ 0.0 & 71.9$\pm$0.1 & 78.9$\pm$0.0 & 55.7$\pm$0.3 & 69.7$\pm$0.5 & 92.3$\pm$0.1\\
	&GAT~\cite{velivckovic2017graph} & 83.0$\pm$0.7 & 72.5$\pm$0.7 & 79.0$\pm$0.3 & 71.4$\pm$0.6 & 89.4$\pm$0.5 & 92.2$\pm$0.2\\
	&DisenGCN~\cite{ma2019disentangled} & 83.7* & 73.4* & 80.5 & 52.7$\pm$0.3 & 87.7$\pm$0.6 & 93.3$\pm$0.1*\\
	&GMNN~\cite{qu2019gmnn} & 83.7* & 73.1 & 81.8* & 74.1$\pm$0.4 & 89.5$\pm$0.6 & 92.7$\pm$0.2\\
	&GraphSAGE~\cite{hamilton2017inductive} & 77.2$\pm$0.3 & 67.8$\pm$0.3 & 77.5 $\pm$0.5 & 78.9$\pm$0.4 & 91.3$\pm$0.5  &93.2 $\pm$0.3 \\
\midrule
	\multirow{5}{*}{\textbf{X},\textbf{A}}
	&CAN~\cite{meng:2019:can}& 75.2$\pm$0.5 & 64.5$\pm$0.2 & 64.8$\pm$0.3 & 78.2$\pm$0.5 & 88.3$\pm$0.6 & 91.2$\pm$0.2 \\
	&DGI~\cite{velivckovic2018deep} & 82.3$\pm$0.6 & 71.8$\pm$0.7 & 76.8$\pm$0.6 & 68.2$\pm$0.6 & 78.2$\pm$0.3 & 92.4$\pm$0.3\\
	&GMI~\cite{peng2020graph} & 82.8$\pm$0.7 & 73.0$\pm$0.3 & 80.1$\pm$0.2 & 62.7$\pm$0.5 & 86.2$\pm$0.3 & 92.3$\pm$0.1 \\
	&MVGRL~\cite{hassani2020contrastive}\ & 83.5$\pm$0.4 & 72.1$\pm$0.6 & 79.8$\pm$0.7 & 88.4$\pm$0.3* & \textbf{92.8}$\pm$0.2 & 93.1$\pm$0.3 \\
	& SAIL & \textbf{84.6}$\pm$0.3 & \textbf{74.2}$\pm$0.4 & \textbf{83.8}$\pm$0.1 & \textbf{89.4}$\pm$0.1 & 92.5$\pm$0.1* & \textbf{93.3}$\pm$0.05\\
\bottomrule
\end{tabular}
\end{adjustbox}
\end{table*}
\section{Experimental Evaluation}

We compare the proposed method with various state-of-the-art methods on six datasets including three citation networks (Cora, Citeseer, Pubmed)~\cite{kipf2016semi}, product co-purchase networks (Computers, Photo), and one co-author network subjected to a subfield Computer Science (CS).  The product and co-author graphs are benchmark datasets from pytorch\_geometric~\cite{fey2019fast}. Due to the limited space, more details about the experimental setting can be found in the appendix.

\begin{table}[!tp]
\caption{Node clustering performance measured by normalized mutual information (NMI) in \%.}
\label{result:clf:cluster}
\centering
\footnotesize
\begin{adjustbox}{max width=8.5cm}
\begin{tabular}{ c | c c c c c c}
\toprule
	\textbf{Methods} & \textbf{Cora} & \textbf{Citeseer} & \textbf{PubMed} & \textbf{Computers} & \textbf{Photo} & \textbf{CS}\\
\midrule
	CAN & 51.7 & 35.4 & 26.7 & 42.6 & 53.4* & 71.3\\
	GMI & 55.9 & 34.7 & 23.3 & 34.5 & 47.2 & 73.9\\
	DGI & 50.5 & 40.1* & 28.8 & 36.8 & 42.1 & 74.7* \\
	MVGRL & 56.1* & 37.6 & \textbf{34.7} & 46.2* & 12.15 & 66.5 \\
	SAIL & \textbf{58.1} & \textbf{44.6} & 33.3* & \textbf{49.1} & \textbf{66.5} & \textbf{76.4}\\
\bottomrule
\end{tabular}
\end{adjustbox}
\end{table}

\begin{table}[tp]
\caption{AUC (in \%) of link prediction.}
\label{link:clf}
\centering
\footnotesize
\begin{adjustbox}{max width=8.5cm}
\begin{tabular}{c | c c c c c c}
\toprule
	\textbf{Methods} & \textbf{Cora} & \textbf{Citeseer} & \textbf{PubMed} & \textbf{Computers} & \textbf{Photo} & \textbf{CS}\\
\midrule
	GMNN & 87.5 & 86.9& 88.8 &82.1 & 86.7 & 91.7 \\
	GAT & 90.5 & 89.1 & 80.5 & 84.5 & 88.4 & 92.2\\
	GCN& 82.6& 83.2& 88.5 &82.1& 86.7& 89.7\\
	DisenGCN& 93.3& 92.0 & 91.1& 78.9 & 77.6 & 94.5\\
\midrule
	DGI & 69.2 & 69.0 & 85.2 & 75.1 & 74.2 & 79.7\\
	MVGRL & 89.5 & 94.4 & 96.1* & 74.6 & 73.1 & 83.1\\
	CAN & 94.8 & 94.8 & 91.9& 94.9* & \textbf{95.0} & 97.1* \\
	GMI & 95.1* & 96.0* & 96.0 & 85.5 & 91.9 & 95.5\\
	SAIL & \textbf{97.3} & \textbf{98.4} & \textbf{98.5} & \textbf{94.9} & 94.6* & \textbf{97.4}\\
\bottomrule
\end{tabular}
\end{adjustbox}
\end{table}
\subsection{Overall Performance} 

\subsubsection{Node Classification.} The GNN methods compared here include convolution or attentive neural networks. Table~\ref{result:node:clf} summarizes the overall performance. The performance of simple GCN learned by the proposed method SAIL consistently outperforms the other baselines learned by supervised and unsupervised objectives. It's noted that DisenGCN iteratively applies attentive routing operation to dynamically reshape node relationships in each layer. By default, it has 5-layers and iteratively applies 6 times, which typically is a deep GNN. From the empirical results, we can see that SAIL can empower the single-layer GNN through iterative inter-model knowledge distillation. 

\subsubsection{Node Clustering.} In node clustering task, we aim at evaluating the quality of the node representations learned by unsupervised methods. The performance is validated by measuring the \emph{normalized mutual information} (NMI). The node representations are learned with the same experimental setting for the node classification task. From the results shown in Table~\ref{result:clf:cluster}, we can see that the proposed method are superior to the baselines in most of cases.

\subsubsection{Link Prediction.} In addition, we attempt answer the question about whether the learned node representations can keep the node proximity. The performance of each method is measured with AUC value of link prediction. All of the methods in this experiment have the same model configuration as the node classification task. From the results shown in Table~\ref{link:clf}, we can see that SAIL still outperforms most of the baselines learned with supervised and unsupervised objectives. From the results in Table~\ref{result:node:clf}, we see that classification of nodes in graph \emph{CS} is indeed an easy task, and most of the GNNs models have similar good performance. However, for link prediction shown in Table~\ref{link:clf}, unsupervised models (CAN and our model) learned better $h$ than those with supervision information, obviously because the supervision information is for node classification, not for link prediction. 
\subsection{Exploring Locality and Feature Smoothness}
Based on our theoretical understanding in Theorem 1, the representation smoothness is mainly influenced by the local structural proximity and feature closeness. Follow the ideas of recent works~\cite{Hou2020Measuring,chen2019measuring}, we conduct empirical study on the node representation smoothness before and after being encoded by the GNNs.

\subsubsection{Before encoding.} With a given graph and node features, we calculate the inverse of average pairwise distance among the raw feature~\cite{Hou2020Measuring}, and clustering coefficient~\cite{watts1998collective} to study the feature smoothness and structural locality, respectively. Combining with the node classification results, we empirically found that most of the neural encoders (e.g., GNNs, even simple MLPs) performed well on node classification in graphs like CS, which has a strong locality and large feature smoothness shown in Figure~\ref{fig:loc:cc}. Interestingly, for graphs with a strong locality but a low node feature smoothness (e.g.,  ``Computers", ``Photo"), unsupervised methods can leverage the graph structure to achieve better performance than supervised methods. 

 \begin{table*}[!tp]
\caption{Accuracy (in $\%$) of node classification after randomly 20\% neighbors removal. The last column shows the average classification accuracy downgrades comparing with the results in Table~\ref{result:node:clf}.}
\label{node:link:clf}
\centering
\footnotesize
\begin{adjustbox}{max width=11cm}
\begin{tabular}{c |c| c c c c c c c}
\toprule
	\textbf{Input} &\textbf{Methods} & \textbf{Cora} & \textbf{Citeseer} & \textbf{PubMed} & \textbf{Computers} & \textbf{Photo} & \textbf{CS}& \textbf{Avg $\downarrow$}\\
\midrule
	\multirow{4}{*}{\textbf{X},\textbf{A},\textbf{Y}}
	&GMNN& 77.2  & 68.8  & 79.8  & 70.8 & 87.5  & 91.6 & 4.1\%\\  
	&GAT & 77.7 & 65.6  & 76.7  & 69.3 & 89.4 & 90.4  & 4.0\%\\
	&GCN & 76.0  & 67.2  & 77.7  & 67.5  & 88.7& 89.9  & 4.5\%\\
	&DisenGCN & 77.6  & 68.2  & 78.3  & 37.5  & 48.8 & 92.4  & 15.1\%\\
\midrule
\multirow{5}{*}{\textbf{X},\textbf{A}}
	&CAN & 73.2  & 64.0 & 63.5  & 77.5  & 88.1 & 91.1  & 1.1\%\\
	&DGI & 72.3 & 70.1  & 71.5  & 67.6 & 77.4   & 91.7  & 4.0\%\\
	&GMI & 77.4 & 68.3  & 76.9 & 54.9  & 82.5  & 89.2 &6.1\%\\
	&MVGRL & 69.5  & 62.5  & 76.5 & 87.2  & \textbf{92.5}  & 91.8 &6.2\%\\
	&SAIL & \textbf{81.0} & \textbf{71.3}  & \textbf{81.2} & \textbf{88.5}  & 92.4 & \textbf{92.7}  & 2.1\%\\
\bottomrule
\end{tabular}
\end{adjustbox}
 \vspace{-0.35cm}
\end{table*}
\begin{table*}[htp]
\caption{Empirical analysis to shown the quality of learned node representations measured by mean average distance (MAD).}
\vspace{-0.1in}
\label{mad:eval}
\centering
\footnotesize
\begin{adjustbox}{max width=13.5cm}
\begin{tabular}{c | c | c c c c c c c c c}
\toprule
	\textbf{Data} & \textbf{Metrics} & \textbf{GCN} & \textbf{GAT} & \textbf{DisenGCN} & \textbf{GMNN} & \textbf{CAN} & \textbf{DGI} & \textbf{GMI} & \textbf{MVGRL} & \textbf{SAIL}\\
\midrule
	\multirow{3}{*}{\textbf{Cora}}
	&$MAD_{nei}$ & 0.075 & 0.029& 0.215 & 0.088 & 0.059 & 0.312 & 0.069 & 0.240 & 0.013 \\
	&$MAD_{gap}$ & 0.308 & 0.083 & 0.471 & 0.390 & 0.900 & 0.557 & 0.966 & 0.661 & 0.322\\
	&$MAD_{ratio}$& 4.13& 2.86& 2.19 & 4.43 & 15.2& 1.77 & 13.83 & 2.75 & 25.3\\
\midrule
\multirow{3}{*}{\textbf{Citeseer}}
	&$MAD_{nei}$ & 0.049 & 0.014& 0.194 & 0.059 & 0.057 & 0.289 & 0.122 & 0.174 & 0.007 \\
	&$MAD_{gap}$ & 0.308 & 0.083 & 0.471 & 0.390 & 0.921 & 0.497 & 0.879 & 0.491 & 0.429\\
	&$MAD_{ratio}$& 4.13& 2.86& 2.19 & 4.43 & 15.1& 1.72 & 6.2 & 2.82 & 61.2\\
\midrule
\multirow{3}{*}{\textbf{Pubmed}}
	&$MAD_{nei}$ & 0.043 & 0.024& 0.054 & 0.068 & 0.037 & 0.112 & 0.086 & 0.180 & 0.024 \\
	&$MAD_{gap}$ & 0.155 & 0.083 & 0.224 & 0.438 & 0.757 & 0.145 & 0.833 & 0.541 & 0.294\\
	&$MAD_{ratio}$& 6.02& 6.43& 4.17 & 6.39 & 20.6& 1.29 & 8.63 & 3.00 & 12.1\\
\bottomrule
\end{tabular}
\end{adjustbox}
\vspace{-0.3cm}
\end{table*}
\subsubsection{After encoding.}  Chen \emph{et a.}~\cite{chen2019measuring} propose to use mean average distance (MAD) between the target node and its neighbors for measuring the smoothness of node representations, and the MAD gap between the neighbor and remote nodes to measure the over- smoothness. Let's denote $MAD_{gap} = MAD_{rmt} - MAD_{nei}$, where $MAD_{nei}$ represents the MAD between the target node and its neighbors, and $MAD_{rmt}$ is defined to measure the MAD of remote nodes. If we get a small MAD but relatively large MAD gap, then we can say that the learnt node representations are not over-smoothed. We define a variant metric $MAD_{ratio} = \frac {MAD_{gap}} {MAD_nei}$ to measure the \emph{information-to-noise} ratio brought by the relative changes of MAD of remote nodes over neighbors. We use the node representations with the same setting as node classification task. The results shown in Table \ref{mad:eval} demonstrate that the proposed method can achieve the best smoothing performance (\emph{i.e.} smallest $MAD_{nei}$). In terms of over-smoothing issue, SAIL has the relative large $MAD_{gap}$ comparing with the scale of the achieved MAD of each method, which can be measured by $MAD_{ratio}$. The empirical results reflect that the proposed method can help to tell the difference from local neighbors and remote nodes.

\subsubsection{Robustness Against Incomplete Graphs.} With the same experimental configuration as link prediction task, we also validate the performance of learned node embeddings from incomplete graph for node classification task. According to the results in Table~\ref{node:link:clf}, SAIL still outperforms baseline methods in most cases, demonstrating the robustness of SAIL performance. It has a larger average downgrade in terms of node classification accuracy than CAN, but still has better classification accuracy. 
 
\subsection{Ablation Study} We conduct node classification experiments to validate the contribution of each component of the proposed SAIL, where $\mathsf{EMI}$ denotes the edge MI loss $l^i_{jk}$, $\mathsf{Intra}$ stands for intra-distill module $\mathcal{R}_{intra}$ and $\mathsf{Inter}$ represents th $\mathcal{R}_{inter}$  in Eq.~\ref{eq:ssl:obj}. From the results shown in Figure 3, we can see that both intra- and inter-distilling module can jointly improve the qualification of learnt node representations. Combining with the edge MI maximization task, we can see a significant improvement on the node classification accuracy. Due to the limited space, we present the ablation studies on link prediction and node clustering tasks in the appendix.
\begin{figure}[htp]
\vspace{-0.3cm}
\centering
\label{fig:abl:std}
\includegraphics[scale=0.4]{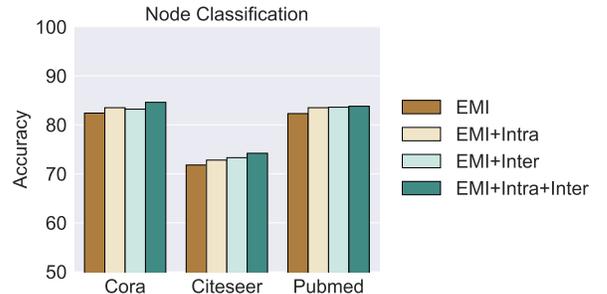}
\vspace{-0.3cm}
\caption{Ablation study of the distillation component influence in node classification accuracy (in $\%$).}
\vspace{-0.5cm}
\end{figure}

\section{Conclusions}
In this work, we propose a self-supervised learning method (SAIL) regularized by graph structure to learn unsupervised node representations for various downstream tasks. We conduct thorough experiments on node classification, node clustering, and link prediction tasks to evaluate the learned node representations. Experimental results demonstrate that SAIL helps to learn competitive shallow GNN which outperforms the state-of-the-art GNNs learned with supervised or unsupervised objectives. This initial study might shed light upon a promising way to implement self-distillation for graph neural networks. In the future, we plan to study how to improve the robustness of the proposed method against adversarial attacks and learn transferable graph neural network for downstream tasks like few-shot classification. 


\clearpage

\bibliography{gcn-ref-short}

\appendix

\section*{A. Experimental Setting}
\subsection{Baselines}
The baselines in this work include both state-of-the-art supervised and unsupervised learning GNNs. The \textbf{supervised} methods are Chebyshev filters (ChebNet)~\cite{defferrard2016convolutional}, MLP~\cite{kipf2016semi}, Graph Convolution Network (GCN)~\cite{kipf2016semi}, Simplified GCN (SGC)~\cite{wu2019simplifying}, Disentangled Graph Convolution Network (DisenGCN)~\cite{ma2019disentangled}, Graph Attention Network (GAT)~\cite{velivckovic2017graph}, Graph Markov Neural Network (GMNN)~\cite{qu2019gmnn} and GraphSAGE~\cite{hamilton2017inductive}. The \textbf{unsupervised} ones include Deep Graph Infomax (DGI)~\cite{velivckovic2018deep}, CAN~\cite{meng:2019:can}, GMI~\cite{peng2020graph}, and Multi-view Graph Representation Learning (MVGRL)~\cite{hassani2020contrastive}. It's noted that the performance of MVGRL shown in this work is different the published results. In terms of the inconsistent results, we find that many researchers also have the same problem. Please refer to issues at https://github.com/kavehhassani/mvgrl/issues/. From a closed issue about the source of Cora, we can see that the split of Cora used in MVGRL is from DGL, which is different from the Cora used in official implementation of many classical GNN baselines like GCN, GAT. For a fair comparison, we use the official code to implement MVGRL with the benchmark data from https://github.com/tkipf/gcn (https://github.com/tkipf/gcn).

\subsection{Dataset}
The data statistics can be found in Table \ref{stat:data}. We implement the baselines with the official code provided by the authors. If the results are not previously reported, we implement them with suggested hyperparameters according to the original paper. The results shown in this work are average value over 10 runs for each task. For the node classification, the experimental configuration for the three citation data is the same as~\cite{kipf2016semi}. To make a fair comparison on the other co-purchase and co-author networks, we randomly select 20\% nodes for supervised training GNNs, 10\% node as validation set, and the rest of nodes are left for testing. For the unsupervised methods, they can access to the complete graph without using node labels.

For the link prediction task, we randomly select 20\% edges for each node as testing, and the rest as the training set. All baselines involved in this experiment only work on the graph built from the training set. For the supervised learning models like GCN, GAT, DisenGCN, GMNN, we still use the same configure as the node classification task to select labelled nodes to train the model, which means the nodes in the selected testing edges can still be labelled nodes for the supervised training strategy. While the unsupervised methods can only access to the edges in the training set.

\begin{table}[!tp]
\vspace{-0.5cm}
\caption{Summary of data statistics.}
\label{stat:data}
\centering
\footnotesize
\begin{tabular}{c c c c c c c}
\toprule
	\textbf{Data} & \textbf{\#Nodes} & \textbf{\#Edges} & \textbf{\#Features} & \textbf{Labels} \\
\midrule
	 Cora & 2,708 & 5,429 & 1,433 & 7\\
	 Citeseer & 3,327 & 4,732 & 3,703 & 6\\
    Pubmed & 19,717 & 44,338 & 500 & 3 \\
	Computers & 13,752 & 287,209 & 767 & 10\\
	Photo & 7,650 & 143,663 & 745 & 8\\
	CS & 18,333 & 81,894 & 6,805 & 15\\
\bottomrule
\end{tabular}
\vspace{-0.3cm}
\end{table}

\section*{B. Model Configuration}
The proposed method has hyper-paramters including regularization weight $\lambda$, $\alpha$ for balancing the contribution of teacher model and local neighbors. The hyper-parameter $\tau$ controls the number of steps to update the teacher model, while $w$ controls the degradation degree of target model. In this work, we set $\tau$ to 30. Since we use the mean pooling operation, then the $\beta$ equals to $\frac {1} {\mathcal{N}(i)}$ by default.  For each hyper-parameter, we use grid search method to valid the best configuration from pre-defined search space, specifically, $\alpha \in \{0, 0.1, 0.5, 1.0\}$, $\lambda \in \{0.0, 0.1, 0.5, 1.0\}$, $w \in [0.0, 1.0]$, and embedding dimension $F'=512$.

\section*{C. Feature Smoothness}
In this work, we use two kinds of methods to measure the feature smoothness before and after being encoded by GNNs. We use raw feature $\textbf{X} = \{x_i | v_i \in \mathcal{V}\}$ to conduct an empirical analysis to show the potential impact of raw feature smoothness on the difficulty to learn qualified node representations. The specific definition can found as follows.
\begin{definition}[Feature Smoothness~\cite{Hou2020Measuring}] We define the feature smoothness $\lambda_f$ over the normalized space $\mathcal{X}=[0,1]^F$ as follows:
\begin{equation*}
\lambda_f = \frac {|| \sum_{i=1, v_i \in \mathcal{V}}^N \sum_{j \in \mathcal{N}(i)} (x_i - x_j)^2 ||_1} {|\mathcal{E}|\cdot F},
\end{equation*}
where $||\cdot||_1$ denotes the \emph{Manhattan norm}.
\end{definition}
Noted that we use normalized $\frac {1} {\lambda_f}$ for better showing the influence of feature smoothness and locality on the model performance. This metric is used in the ``before encoding" section, which is at the page 6 of submitted manuscript.

The other method is proposed by Chen \emph{et al.}~\cite{chen2019measuring}. They study the over-smoothing issue of encoded node embeddings through measuring the mean average cosine distance (MAD) of node pairs. When attempting to follow the same setting to calculate theMADs, we observe that there exists graphs with very few number of nodes having remote nodes. Therefore, we relax the condition to separate the neighbor and remote nodes. In this work, we modify the nodes that can be reached in larger than 3 steps as the remote nodes. Besides the way to define nearby and remote nodes, we also note that Chen \emph{et al.} use a different way to split the experimental data is very different from classical GCN~\cite{kipf2016semi}. In this work, the obtained MAD values are relative smaller than the results shown in~\cite{chen2019measuring}. We guess the reason lies at the different experimental setting.

\section*{D. Ablation Studies}
We conduct ablation studies on the node clustering and link prediction. The results shown in Table \ref{stat:data:link} and \ref{stat:data:clus} have the similar trend as the results for node classification. Both of inter- and intra- distillation show positive impact on advancing the performance of the base model.
\begin{table}[!tp]
\vspace{-0.5cm}
\caption{Ablation study of the distillation component on link prediction task, measured by AUC in \%.}
\label{stat:data:link}
\centering
\footnotesize
\begin{tabular}{c c c c c c c}
\toprule
	\textbf{Distillation} & \textbf{Cora} & \textbf{Citeseer} & \textbf{Pubmed} \\
\midrule
	 EMI & 95.6 & 95.8 & 96.1 \\
	 EMI+Intra & 95.9 & 96.1 & 96.8\\
    EMI+inter & 96.5 & 96.8 & 97.1 \\
	EMI+Inter+Intra & 97.3 & 98.3 & 98.4 \\
\bottomrule
\end{tabular}
\end{table}

\begin{table}[!tp]
\caption{Ablation study of the distillation component on node clustering task, measured by Normalized Mutual Information (NMI) in \%.}
\label{stat:data:clus}
\centering
\footnotesize
\begin{tabular}{c c c c c c c}
\toprule
	\textbf{Distillation} & \textbf{Cora} & \textbf{Citeseer} & \textbf{Pubmed} \\
\midrule
	 EMI & 51.2 & 42.6 & 31.7 \\
	 EMI+Intra & 53.4 & 42.8 & 32.1\\
    EMI+inter & 55.6 & 43.5 & 32.8 \\
	EMI+Inter+Intra & 58.1 & 44.6 & 33.3 \\
\bottomrule
\end{tabular}
\end{table}

\end{document}